\documentclass[11pt]{article}

\usepackage[T1]{fontenc}
\usepackage[utf8]{inputenc}
\usepackage{lmodern}
\usepackage[a4paper,margin=1in]{geometry}
\usepackage{amsmath,amssymb,amsthm,mathtools}
\usepackage{microtype}
\usepackage{hyperref}
\usepackage{enumitem}
\DeclareMathOperator*{\argmax}{arg\,max}
\hypersetup{
  colorlinks=true,
  linkcolor=blue,
  citecolor=blue,
  urlcolor=blue
}

\newtheorem{theorem}{Theorem}
\newtheorem{corollary}{Corollary}

\newtheorem{remark}{Remark}
\newtheorem{definition}{Definition}

\title{Hidden Biases in Conditioning Autoregressive Models}
\author{
Fran\c{c}ois Pachet\\
\small LIP6, Sorbonne Universit\'e\\
\small \texttt{pachet@gmail.com}\\
\and
Pierre Roy\\
\small Soundtrap\\
\small \texttt{roypie@gmail.com}\\
}
\date{}

\begin{document}

\maketitle

\begin{abstract}
Large language and music models are increasingly used for constrained generation: rhyming lines, fixed meter, inpainting or infilling, positional endings, and other global form requirements. These systems often perform strikingly well, but the induced procedures are usually not exact conditioning of the underlying autoregressive model. This creates a hidden inferential bias, distinct from the better-known notion of bias inherited from the training set: samples are distorted relative to the true constrained distribution, with no generic guarantee of complete coverage of the admissible solution space or of correct conditional probabilities over valid completions. We formalize several exact inference tasks for autoregressive models and prove corresponding hardness results. For succinctly represented autoregressive models whose next-token probabilities are computable in polynomial time, exact sentence-level maximum a posteriori (MAP) decoding is NP-hard. This hardness persists under unary and metrical constraints. On the sampling side, exact conditioned normalization is \#P-hard even for regular constraints such as fixed-length terminal events. Unlike finite-state Markov models, general autoregressive models do not admit a bounded-state dynamic program for these tasks. These results formalize a standard claim in the neural decoding literature: local autoregressive sampling is easy, whereas exact decoding and exact conditioning under global form constraints are computationally intractable in general.
\end{abstract}

\section{Introduction}

Large language models are now routinely used for constrained generation. Users ask them to write rhyming lines, continue a poem in a prescribed meter, fill in missing text, complete a partially specified sentence, enforce a suffix or a letter pattern at a given position, or satisfy combinations of such requirements. Similar uses appear in symbolic music, where sequence models are asked to inpaint missing passages or complete a score subject to formal constraints. Empirically, these systems often perform strikingly well. They produce many valid-looking constrained outputs, and they often give the impression that formal control is largely solved.

That impression is misleading. When an autoregressive model is used in this way, the resulting procedure is usually not exact conditioning of the original model on the requested constraint. Instead, practical systems rely on heuristic search, reranking, local reweighting, rejection sampling, or separately trained infilling architectures. The resulting samples are therefore biased relative to the exact conditional law of the fixed underlying autoregressive model. More importantly, one generally does not know which part of the admissible solution space is actually explored, whether all valid completions remain reachable, or whether feasible completions are produced with the correct conditional probabilities.

This is a different notion of bias from the one usually discussed for large language models. There the term typically refers to skew inherited from the training distribution or reflected in the generated content \cite{blodgett-etal-2020-language}. Here the bias is computational: with the pretrained model held fixed, approximate constrained-generation procedures distort the exact conditional distribution induced by the requested constraint.

The results of this paper explain why. The issue is not that such constraints are linguistically or musically exotic. On the contrary, many of them are simple to state and even regular or additive in structure. The difficulty comes from the mismatch between local left-to-right generation and global exact satisfaction of a property of the whole completed sequence.

An autoregressive model with parameters $\theta$ defines, for each finite sequence $x=x_{1:n}$,
\[
P_{\theta}(x_{1:n}) = \prod_{i=1}^{n} P_{\theta}(x_i \mid x_{1:i-1}).
\]
This factorization suggests computational simplicity. For unconstrained generation, that is correct: one samples sequentially from the next-token conditionals. But sequence-level optimization or conditioning is a different problem. Given access to the local conditionals, one may ask for the globally most probable complete sequence,
\[
x^* \in \argmax_{x \in \mathcal{C}} P_{\theta}(x),
\]
or for exact sampling from the model conditioned on some global event.

For finite-state Markov models, such tasks are solved by dynamic programming because the relevant dependence on the past is summarized by a bounded state. General autoregressive models do not enjoy this property. Their next-token distribution may depend on the entire prefix through an unbounded contextual representation. As a result, the factorization of the probability does not imply a Viterbi-style algorithm for recovering the most probable sequence, nor an efficient forward-backward style algorithm for exact conditioning.

This distinction is reflected in practice. In the neural sequence generation literature, exact MAP decoding is treated as computationally difficult, and approximate methods such as greedy search, beam search, and reranking are used instead \cite{stahlberg-byrne-2019,meister-vieira-cotterell-2020,eikema-aziz-2020}. Yet this point is usually made algorithmically rather than complexity-theoretically. The purpose of this note is to make a family of precise hardness statements explicit.

Our contribution is elementary and conceptually clarifying. We prove that exact sentence-level MAP decoding is NP-hard for a natural class of succinctly represented autoregressive models. The hardness persists under unary constraints and under metrical constraints. On the sampling side, we prove that exact conditioned normalization is already \#P-hard for regular constraints as simple as ``having fixed length $L$ and ending in \texttt{eos}.'' This perspective explains why current constrained-generation systems can be useful and impressive while still failing to deliver exact conditional sampling: they do not provide a generic guarantee of complete coverage of the valid solution space, nor of correct conditional probabilities over that space.

\section{Setup}

Let $\Sigma$ be a finite vocabulary containing a distinguished end-of-sequence symbol $\mathtt{eos}$. An autoregressive model assigns probabilities to finite complete sequences through conditionals of the form
\[
P_{\theta}(x_{1:n}) = \prod_{i=1}^{n} P_{\theta}(x_i \mid x_{1:i-1}),
\]
with the convention that generation terminates as soon as $\mathtt{eos}$ is produced. We therefore write $\mathcal{C}$ for the set of complete sequences
\[
\mathcal{C}
=
\bigcup_{n > 1} \left\{
                            x_{1:n}\in\Sigma^n : x_n=\mathtt{eos}
                            \text{ and }
                            x_i\neq \mathtt{eos} \text{ for } i<n
                            \right\}.
\]

\begin{definition}[Sentence-level MAP decoding]
Given an autoregressive model $P_{\theta}$, the sentence-level maximum a posteriori (MAP) problem is
\[
x^* \in \argmax_{x \in \mathcal{C}} P_{\theta}(x).
\]
\end{definition}

In the present setting it simply denotes the most probable complete sequence under the model.

To obtain a genuine complexity statement, the model itself must be part of the input. We therefore consider a class $\mathcal{M}$ of \emph{succinctly represented autoregressive models} such that:

\begin{enumerate}[label=(\roman*)]
\item each model $P_{\theta} \in \mathcal{M}$ has a finite description whose size is polynomial in the instance size;
\item for every prefix $x_{1:i-1}$ and every token $a \in \Sigma$, the conditional probability $P_{\theta}(a \mid x_{1:i-1})$ is a rational number whose exact value is computable in polynomial time from the model description and the prefix.
\end{enumerate}

This captures the natural setting in which local next-token evaluation is efficient, while global optimization or exact conditioning over complete sequences can still be hard.

\section{Exact MAP decoding}

\begin{theorem}\label{thm:map-hard}
Exact sentence-level MAP decoding over succinctly represented autoregressive models is NP-hard.
\end{theorem}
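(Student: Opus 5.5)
We prove the theorem by a polynomial-time reduction from SAT, stated as a decision problem: given a model $P_\theta\in\mathcal{M}$ and a rational threshold $\tau$, decide whether some $x\in\mathcal{C}$ satisfies $P_\theta(x)\ge\tau$. Any exact MAP decoder answers this question by computing $x^*$ and then evaluating $P_\theta(x^*)$ as a product of polynomially many polynomial-time rationals, using property (ii). NP-hardness of this decision problem therefore implies NP-hardness of MAP decoding.

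Let $\varphi$ be a CNF formula over variables $v_1,\dots,v_m$. We take $\Sigma=\{0,1,\mathtt{eos}\}$, and the model description is just $\varphi$ together with a small fixed program, so property (i) holds. The conditionals are as follows.

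\begin{itemize}
\item For a prefix of length $i-1<m$ containing no $\mathtt{eos}$, put probability $1/2$ on each of $0$ and $1$, and $0$ on $\mathtt{eos}$.
\item For a prefix $x_{1:m}\in\{0,1\}^m$, read it as an assignment. If $x_{1:m}\models\varphi$, put probability $1$ on $\mathtt{eos}$. Otherwise put probability $1/2$ on $\mathtt{eos}$ and $1/2$ on $0$.
\item For any longer prefix, emit $\mathtt{eos}$ with probability $1$.
\end{itemize}

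The model is only ever queried on prefixes that do not yet contain $\mathtt{eos}$; on other prefixes we may define it arbitrarily, for example as $\mathtt{eos}$ with probability $1$. Each conditional is computable in polynomial time, since it only requires evaluating $\varphi$ on an assignment, so property (ii) holds.

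The complete sequences with positive probability are the following.

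\begin{itemize}
\item For each satisfying assignment $a$, the sequence $a\,\mathtt{eos}$ has probability $2^{-m}$.
\item For each non-satisfying assignment $a$, the sequence $a\,\mathtt{eos}$ has probability $2^{-m-1}$.
\item For each non-satisfying assignment $a$, the sequence $a\,0\,\mathtt{eos}$ has probability $2^{-m-1}$.
\end{itemize}

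Hence the maximum of $P_\theta$ over $\mathcal{C}$ equals $2^{-m}$ if $\varphi$ is satisfiable and $2^{-m-1}$ otherwise. Setting $\tau=2^{-m}$ completes the reduction. The MAP sequence itself also certifies satisfiability: its first $m$ tokens form a satisfying assignment exactly when $\varphi$ is satisfiable.

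The main point of care is the termination convention. We must make sure that no short sequence, including one that stops early, can beat $2^{-m}$ by accident; the zero probability on $\mathtt{eos}$ before position $m+1$ rules this out. We must also make sure that the probability mass lost by non-satisfying branches really splits, so that every complete sequence through an unsatisfying prefix has probability strictly below $2^{-m}$. The $1/2$–$1/2$ split at position $m+1$ ensures this.

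An alternative design would give unsatisfying assignments probability $0$ on every continuation except $\mathtt{eos}$, which is not allowed to vanish. That version breaks the requirement that the conditionals form a distribution. This is why we use the split, which keeps every conditional a proper distribution with positive total mass.
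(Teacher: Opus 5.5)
Your proposal is correct and is essentially the paper's own reduction: uniform bits for the first $m$ positions, $\mathtt{eos}$ with probability $1$ after a satisfying assignment, and a $\tfrac12$--$\tfrac12$ split after a non-satisfying one, so that every such complete sequence gets $2^{-m-1}<2^{-m}$. The differences are cosmetic: you split between $\mathtt{eos}$ and $0$ instead of the paper's fresh tokens $b_0,b_1$, and you phrase the conclusion via the threshold $\tau=2^{-m}$, which the paper only does later for MAP-THRESHOLD. Your closing remark is off as literally stated, since putting all mass on $\mathtt{eos}$ after an unsatisfying assignment is a valid conditional distribution; that design fails only because it gives those sequences probability $2^{-m}$ and erases the gap, but this aside plays no role in the proof.
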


\begin{proof}
We reduce from SAT. Let $\varphi$ be a CNF formula over Boolean variables $v_1,\dots,v_m$. From $\varphi$ we construct, in polynomial time, an autoregressive model $P_{\varphi}$ over the alphabet
\[
\Sigma = \{0,1,b_0,b_1,\mathtt{eos}\}.
\]
The constructed model assigns higher probability to satisfying assignments than to non-satisfying ones, so that any MAP solution encodes a satisfying assignment.

For the first $m$ positions, the model chooses bits uniformly:
\[
P_{\varphi}(x_i=0 \mid x_{1:i-1}) = P_{\varphi}(x_i=1 \mid x_{1:i-1}) = \tfrac12,
\qquad i=1,\dots,m.
\]
Hence every prefix $a=(a_1,\dots,a_m)\in\{0,1\}^m$ has probability $2^{-m}$ and encodes a truth assignment for $\varphi$.

At step $m+1$, the model inspects the assignment encoded by the prefix. If $a \models \varphi$, it emits $\mathtt{eos}$ with probability $1$, so the sequence terminates at length $m+1$. Otherwise it emits $b_0$ and $b_1$ with probabilities $\tfrac12$ each. In the latter case, at step $m+2$ the model emits $\mathtt{eos}$ with probability $1$.

Therefore,
\[
P_{\varphi}(a\,\mathtt{eos}) = 2^{-m}
\qquad
\text{if } a \models \varphi,
\]
whereas for each $j\in\{0,1\}$,
\[
P_{\varphi}(a\,b_j\,\mathtt{eos}) = 2^{-m-1}
\qquad
\text{if } a \not\models \varphi.
\]

If $\varphi$ is satisfiable, every satisfying assignment yields a complete sequence of probability $2^{-m}$, and every complete sequence corresponding to a non-satisfying assignment has strictly smaller probability $2^{-m-1}$. Hence any MAP-optimal sequence encodes a satisfying assignment.

If $\varphi$ is unsatisfiable, then every complete sequence has probability exactly $2^{-m-1}$.

Thus an exact MAP decoder decides SAT: compute a most probable complete sequence and check whether its $(m+1)$st token is $\mathtt{eos}$, equivalently whether its first $m$ symbols satisfy $\varphi$. The construction is polynomial-size, and each conditional probability is polynomial-time computable because the only nontrivial step is evaluating whether an assignment satisfies a CNF formula. Since SAT is NP-hard, exact sentence-level MAP decoding is NP-hard.
\end{proof}

\section{Unary-constrained MAP decoding}

We now consider position-wise unary constraints. Let $S_1,\dots,S_n \subseteq \Sigma$ be allowed token sets, and consider the constrained optimization problem
\[
x^* \in \argmax_{x\in\mathcal{C},\ |x|=n} P_{\theta}(x)
\qquad
\text{subject to } x_i \in S_i \text{ for all } i.
\]

\begin{corollary}\label{cor:unary-hard}
Exact sentence-level MAP decoding under unary position-wise constraints is NP-hard.
\end{corollary}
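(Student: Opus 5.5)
We reduce from SAT again, reusing the construction $P_{\varphi}$ from the proof of Theorem~\ref{thm:map-hard} but fixing the length so that the problem is well posed with position-wise constraints. The Theorem~\ref{thm:map-hard} model itself does not fit: satisfying assignments yield sequences of length $m+1$ and non-satisfying ones yield length $m+2$, while the constrained problem fixes $|x|=n$. We therefore modify the model so that all complete sequences have a common length $n=m+2$, and the unary constraints do nothing more than enforce Boolean tokens on the first $m$ positions.

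\textbf{Construction.} Given a CNF $\varphi$ over $v_1,\dots,v_m$, define $P'_{\varphi}$ over $\Sigma=\{0,1,b_0,b_1,\mathtt{eos}\}$.
\begin{enumerate}[label=(\alph*)]
\item For $i\le m$, the model emits $0$ or $1$ with probability $\tfrac12$ each.
\item At step $m+1$, if the prefix $a\models\varphi$ it emits $b_0$ with probability $1$. Otherwise it emits $b_0$ and $b_1$ with probability $\tfrac12$ each.
\item At step $m+2$ it emits $\mathtt{eos}$ with probability $1$.
\end{enumerate}
Take $n=m+2$ and the constraint sets
\[
S_i=\{0,1\}\ (i\le m),\qquad S_{m+1}=\{b_0,b_1\},\qquad S_{m+2}=\{\mathtt{eos}\}.
\]
Each conditional is rational and polynomial-time computable, since the only nontrivial computation is evaluating $\varphi$ on the prefix. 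The description has size polynomial in $|\varphi|$.

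\textbf{Key computation.} Every feasible sequence has the form $a\,b_j\,\mathtt{eos}$, and
\[
P'_{\varphi}(a\,b_0\,\mathtt{eos})=2^{-m} \quad\text{if } a\models\varphi,
\]
whereas every feasible sequence with $a\not\models\varphi$ has probability $2^{-m-1}$. Hence the maximum constrained probability equals $2^{-m}$ if and only if $\varphi$ is satisfiable. A MAP decoder for the constrained problem therefore decides SAT: compute a maximizer $x^*$ and check in polynomial time whether its first $m$ symbols satisfy $\varphi$. Equivalently, compute $P'_{\varphi}(x^*)$ via the chain rule and compare it to $2^{-m}$.

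An alternative is to keep $P_{\varphi}$ unchanged and take $n=m+1$ with $S_{m+1}=\{\mathtt{eos}\}$. The feasible set is then exactly the satisfying assignments followed by $\mathtt{eos}$. This is also fine, but it makes the feasible set empty when $\varphi$ is unsatisfiable, so the decoder's behaviour on infeasible instances has to be specified. That is why I prefer the version with the padding tokens $b_0,b_1$, which guarantees the feasible set is always nonempty.

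\textbf{Main obstacle.} The only delicate point is this convention about feasibility and the fixed-length requirement, and the padding construction resolves it. The remaining verifications are routine.
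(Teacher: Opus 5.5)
Your reduction is correct. It is a padded variant of the paper's argument rather than a new idea.

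The paper uses exactly the alternative you set aside. It keeps $P_{\varphi}$ from Theorem~\ref{thm:map-hard}, takes $n=m+1$ with $x_i\in\{0,1\}$ for $i\le m$ and $x_{m+1}=\mathtt{eos}$, and observes that the constrained optimum is $2^{-m}$ if $\varphi$ is satisfiable and $0$ otherwise.

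Your reason for rejecting that route misreads the setup. $\mathcal{C}$ is defined syntactically (end in $\mathtt{eos}$, no earlier $\mathtt{eos}$) with no positivity requirement. So $a\,\mathtt{eos}$ with $a\not\models\varphi$ is a feasible sequence of probability $0$, and the feasible set is always $\{a\,\mathtt{eos} : a\in\{0,1\}^m\}$, never empty. For the same reason, the original model ``fits'' the fixed-length formulation without modification.

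Each version buys something different.
\begin{itemize}
\item The paper's version needs no new model, and the hardness is created by the constraint itself: $x_{m+1}=\mathtt{eos}$ is what kills the non-satisfying branch. This zero-versus-positive structure is also what lets the same $P_{\varphi}$ with $L=m+1$ give $Z_L=\#\mathrm{SAT}(\varphi)/2^m$ in Theorem~\ref{thm:zl-hard}. Your padded model puts all its mass on length $m+2$, so $Z_{m+2}=1$ carries no information.
\item Your version keeps a positive-probability feasible completion for every assignment. That spares you any convention about zero-mass or infeasible instances.
\item The cost of your version is that its constraints only exclude sequences of probability $0$ under $P'_{\varphi}$. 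The constrained problem therefore coincides with unconstrained MAP on these instances, so you are essentially re-proving Theorem~\ref{thm:map-hard} with equalized lengths.
\end{itemize}
Either suffices for NP-hardness, since one hard special case is all that is needed.
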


\begin{proof}
Using the construction above, impose the unary constraints
\[
x_i \in \{0,1\}, \qquad i=1,\dots,m,
\]
and
\[
x_{m+1} = \mathtt{eos}.
\]
Then the feasible sequences are exactly those of the form $a\,\mathtt{eos}$ with $a\in\{0,1\}^m$. Under the model $P_{\varphi}$,
\[
P_{\varphi}(a\,\mathtt{eos})=
\begin{cases}
2^{-m}, & \text{if } a \models \varphi,\\[4pt]
0, & \text{otherwise},
\end{cases}
\]
because a non-satisfying assignment cannot emit $\mathtt{eos}$ at position $m+1$ in this construction. Hence the optimum constrained probability is positive if and only if $\varphi$ is satisfiable, and an exact constrained MAP decoder decides SAT by returning an optimal feasible sequence whose probability can then be evaluated in polynomial time. Therefore unary-constrained MAP decoding is NP-hard. In particular, NP-hardness already holds for the fixed-terminal constraint that the sequence end with $\mathtt{eos}$ at a prescribed position.
\end{proof}

\section{Exact conditioning on a terminal token at fixed length}\label{sec:fixedlength}

Constraining the last token to be $\mathtt{eos}$ at a prescribed length $L$ is a special case of unary constraints: one simply takes
\[
S_i=\Sigma\setminus\{\mathtt{eos}\} \quad (i<L),
\qquad
S_L=\{\mathtt{eos}\}.
\]
However, the associated inference task is different. For MAP decoding, this is just a special unary-constrained optimization problem. For exact sampling, what matters is the normalization constant of the conditioned distribution.

For fixed length $L$, define
\[
Z_L(\theta)
=
\sum_{x_{1:L-1}\in(\Sigma\setminus\{\mathtt{eos}\})^{L-1}} P_{\theta}(x_{1:L-1},\mathtt{eos}),
\]
that is, the total probability mass of all sequences of length exactly $L$ ending in $\mathtt{eos}$.

Exact sampling from the fixed-length conditional distribution
\[
P_{\theta}(x \mid x\in\mathcal{C},\ |x|=L)
\]
requires, at each step, access to conditioned continuation masses of this kind.

\begin{theorem}\label{thm:zl-hard}
For succinct autoregressive models with polynomial-time next-token evaluation, exact computation of $Z_L(\theta)$ is \#P-hard.
\end{theorem}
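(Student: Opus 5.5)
We reduce from \#SAT, reusing the construction $P_{\varphi}$ from the proof of Theorem~\ref{thm:map-hard}. Given a CNF formula $\varphi$ over $v_1,\dots,v_m$, we take the model $P_{\varphi}$ over $\Sigma=\{0,1,b_0,b_1,\mathtt{eos}\}$ and the length $L=m+1$. By the end of that proof, every conditional probability of $P_{\varphi}$ is a rational computable in polynomial time from $\varphi$ and the prefix. So $P_{\varphi}$ belongs to the class $\mathcal{M}$, and the map $\varphi\mapsto(P_{\varphi},L)$ is a polynomial-time reduction.

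Next we evaluate $Z_{m+1}(\varphi)$ directly. Consider a sequence of length exactly $m+1$ ending in $\mathtt{eos}$ with positive probability. Its first $m$ tokens must lie in $\{0,1\}$, since the model emits only bits there. Writing $a$ for this prefix, the sequence is $a\,\mathtt{eos}$. As in the proof of Corollary~\ref{cor:unary-hard}, $P_{\varphi}(a\,\mathtt{eos})=2^{-m}$ when $a\models\varphi$ and $0$ otherwise. Sequences whose prefix contains $b_0$, $b_1$ or other non-bit tokens at positions $\le m$ contribute $0$. Hence
\[
Z_{m+1}(\varphi)=2^{-m}\,\#\{a\in\{0,1\}^m : a\models\varphi\}.
\]
Multiplying the exact rational value $Z_{m+1}$ by $2^m$ recovers $\#\varphi$, which proves \#P-hardness under polynomial-time (even parsimonious up to scaling) reductions.

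There is no real obstacle. The only care needed is bookkeeping. First, $Z_L$ sums over all of $(\Sigma\setminus\{\mathtt{eos}\})^{L-1}$, so we must check that non-bit prefixes get zero mass. Second, the output $Z_L$ is a rational rather than an integer, so \#P-hardness is meant in the usual sense of a polynomial-time Turing (one-query) reduction from \#SAT, with the rescaling by $2^m$ as post-processing. If one prefers full-support models, the same argument can be run with smoothed probabilities. This gives $Z$ as an affine function of $\#\varphi$ with known coefficients, from which $\#\varphi$ is still recovered exactly. We do not need that variant, however.
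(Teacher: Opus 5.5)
Your proposal is correct and follows essentially the same route as the paper: the same gadget model $P_{\varphi}$ with $L=m+1$ gives $Z_L = \#\mathrm{SAT}(\varphi)/2^m$, and rescaling by $2^m$ recovers the count. Your extra bookkeeping makes explicit two points the paper leaves implicit: non-bit prefixes receive zero mass, and because $Z_L$ is rational the reduction is a one-query Turing reduction.
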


\begin{proof}
We reduce from \#SAT. Let $\varphi$ be a CNF formula over Boolean variables $v_1,\dots,v_m$, and let $L=m+1$. Construct an autoregressive model $P_{\varphi}$ over
\[
\Sigma = \{0,1,b_0,b_1,\mathtt{eos}\}
\]
as follows. For the first $m$ positions,
\[
P_{\varphi}(x_i=0 \mid x_{1:i-1}) = P_{\varphi}(x_i=1 \mid x_{1:i-1}) = \tfrac12,
\qquad i=1,\dots,m.
\]
Thus the prefix $x_{1:m}$ encodes a uniformly random truth assignment $a\in\{0,1\}^m$.

At step $m+1$, if $a\models\varphi$, the model emits $\mathtt{eos}$ with probability $1$; otherwise it emits $b_0$ and $b_1$ with probabilities $\tfrac12$ each, followed by $\mathtt{eos}$ with probability $1$ at step $m+2$.

Hence,
\[
Z_L(P_{\varphi})
=
\sum_{a\in\{0,1\}^m} P_{\varphi}(a,\mathtt{eos})
=
\frac{\#\mathrm{SAT}(\varphi)}{2^m}.
\]
Therefore exact computation of $Z_L(\theta)$ yields the number of satisfying assignments of $\varphi$. Since \#SAT is \#P-hard, so is exact computation of $Z_L(\theta)$.
\end{proof}

\begin{corollary}
Exact sampling from the fixed-length conditional distribution
\[
P_{\theta}(x \mid x\in\mathcal{C},\ |x|=L)
\]
does not reduce to ordinary ancestral sampling in the general autoregressive setting. It requires global conditioned continuation masses whose exact computation is \#P-hard.
\end{corollary}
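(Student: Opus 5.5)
The corollary has two parts. The first says that ancestral sampling does not give the conditional law. The second says that exact conditioned sampling needs quantities as hard as $Z_L$, which is covered by Theorem~\ref{thm:zl-hard}. I would handle them in that order.

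\emph{Ancestral sampling does not give the conditional law.} I would argue by direct computation, using the chain rule for the conditional distribution. For a prefix $x_{1:i-1}$, define the conditioned continuation mass
\[
Z_L(x_{1:i-1}) = \sum_{x_{i:L-1}\in(\Sigma\setminus\{\mathtt{eos}\})^{L-i}} P_\theta(x_{1:L-1},\mathtt{eos}).
\]
Then the exact conditional next-token law is
\[
Q(x_i=a\mid x_{1:i-1}) = \frac{Z_L(x_{1:i-1}a)}{Z_L(x_{1:i-1})},
\]
for $a\neq\mathtt{eos}$ and $i<L$. This is generally not proportional to $P_\theta(a\mid x_{1:i-1})$ restricted to $S_i$.

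The model $P_\varphi$ from Theorem~\ref{thm:zl-hard} already shows this. Any satisfiable formula that is not a tautology, say $\varphi=v_1$, works. Ancestral sampling with the unary masks $S_i$ (local renormalization, then forcing $\mathtt{eos}$ at step $L$) picks $x_1=0$ with probability $1/2$. But the exact conditional law gives $x_1=0$ probability zero, because every sequence starting with $0$ violates $\varphi$ and has zero mass at length $L=m+1$. So no local reweighting of the next-token conditionals reproduces the conditional law in general.

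\emph{The hardness part.} The exact sampler's step-$i$ distribution is determined by the ratios of the $Z_L(\cdot)$ values above. At $i=1$ the empty prefix gives $Z_L(\varepsilon)=Z_L(\theta)$, whose exact computation is \#P-hard by Theorem~\ref{thm:zl-hard}.

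To make the reduction tight I would use self-reducibility. On $P_\varphi$, the ratios $Z_L(a_{1:i})/Z_L(a_{1:i-1})$ along any chosen prefix multiply out to
\[
\frac{P_\varphi(a\,\mathtt{eos})}{Z_L(\theta)}.
\]
If $a\models\varphi$, the numerator $P_\varphi(a\,\mathtt{eos})=2^{-m}$ is known. So an oracle returning the exact conditional next-token probabilities, queried along one satisfying prefix, yields $Z_L(\theta)$ and hence $\#\mathrm{SAT}(\varphi)$. That prefix can be found greedily by always taking a token of positive conditional probability.

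The main obstacle is phrasing "requires" rigorously. A sampler with exact output law is not literally forced to compute these masses, since it is a randomized procedure rather than a function evaluation. I would therefore state the claim as the oracle reduction just described: the exact conditional next-token probabilities that any exact autoregressive conditioned sampler must realize are \#P-hard to compute. I would then note, as the corollary's wording allows, that this is the precise sense in which exact sampling cannot be reduced to ordinary ancestral sampling.
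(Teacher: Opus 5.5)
Your proof is correct, and it is more careful than the paper's. The paper gives no separate argument. The corollary rests on the sentence before Theorem~\ref{thm:zl-hard}, which says exact sampling ``requires'' conditioned continuation masses, together with the \#P-hardness of $Z_L(\theta)$. That is exactly your one-line observation that $Z_L(\varepsilon)=Z_L(\theta)$. Read literally, this leaves the gap you identified: a left-to-right exact sampler only uses the ratios $Z_L(x_{1:i-1}a)/Z_L(x_{1:i-1})$, so hardness of the normalizer alone does not transfer to what the sampler needs. Your telescoping argument closes that gap. On $P_\varphi$ the conditional law is uniform on satisfying assignments, so the product of exact next-token conditionals along any satisfying prefix is $1/\#\mathrm{SAT}(\varphi)$. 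This gives a genuine polynomial-time Turing reduction from \#SAT to the exact conditional next-token probabilities, and your oracle reading of ``requires'' is the right way to make the statement precise. The paper's route is shorter and conveys the intuition; yours turns ``requires'' into a checkable reduction. Two small refinements. First, end the greedy walk by checking that the prefix satisfies $\varphi$, and output $0$ otherwise; this keeps the reduction sound when $\#\mathrm{SAT}(\varphi)=0$, where the conditional law is undefined and the oracle's answers are meaningless. Second, your $\varphi=v_1$ example refutes only naive masking, and only through a dead end. On $P_\varphi$, masked sampling with restart is just rejection sampling, which is exact but needs $2^m/\#\mathrm{SAT}(\varphi)$ expected trials. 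So the claim that no local reweighting reproduces the conditional law is carried by your hardness part (reading ``local'' as polynomial-time computable), not by the example.
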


\begin{remark}
The set
\[
R_L = \{x\in\mathcal{C} : |x|=L\}
=
(\Sigma\setminus\{\mathtt{eos}\})^{L-1}\mathtt{eos}
\]
is a regular language. Thus regularity of the constraint language is not sufficient for tractability in the general autoregressive setting considered here.
\end{remark}

\section{Metrical constraints}

A fixed terminal position is a special case of a metrical constraint. Let each token $a\in\Sigma$ be assigned a nonnegative integer syllabic weight $w(a)$, and define the set of metrical sentences of total weight $K$ by
\[
\mathcal{S}_K
=
\left\{
x\in\mathcal{C} : \sum_{i=1}^{|x|} w(x_i)=K
\right\}.
\]

The corresponding constrained MAP problem is
\[
\argmax_{x\in\mathcal{S}_K} P_{\theta}(x),
\]
and the corresponding normalization constant is
\[
Z_K(\theta)=\sum_{x\in\mathcal{S}_K} P_{\theta}(x).
\]

The fixed-length terminal constraint studied above is an immediate special case: assign unit weight to every token, that is, $w(a)=1$ for all $a\in\Sigma$. Then the condition $\sum_i w(x_i)=K$ is exactly the condition that the sentence has length $K$.

\begin{corollary}
Exact MAP decoding under a fixed metrical constraint is NP-hard.
\end{corollary}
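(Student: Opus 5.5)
The plan is to reduce from the fixed-terminal case of Corollary~\ref{cor:unary-hard}, using the observation just made that unit weights turn the metrical constraint into a length constraint. I take the SAT model $P_{\varphi}$ from the proof of Theorem~\ref{thm:map-hard} over $\Sigma=\{0,1,b_0,b_1,\mathtt{eos}\}$, set $w(a)=1$ for every $a\in\Sigma$, and set $K=m+1$. Then
\[
\mathcal{S}_K=\{x\in\mathcal{C}: |x|=m+1\}.
\]
Because every $x\in\mathcal{C}$ ends in its first occurrence of $\mathtt{eos}$, this set is exactly the feasible set of the unary constraints $S_i=\Sigma\setminus\{\mathtt{eos}\}$ for $i\le m$ and $S_{m+1}=\{\mathtt{eos}\}$.

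Next I check which sequences in $\mathcal{S}_K$ have positive probability. Under $P_{\varphi}$, the first $m$ tokens lie in $\{0,1\}$ with probability one, so any $x\in\mathcal{S}_K$ that uses $b_0$ or $b_1$ among its first $m$ positions has probability $0$. A sequence $a\,\mathtt{eos}$ with $a\in\{0,1\}^m$ has probability $2^{-m}$ if $a\models\varphi$ and $0$ otherwise, since a non-satisfying prefix forces $b_0$ or $b_1$ at step $m+1$. It follows that $\max_{x\in\mathcal{S}_K}P_{\varphi}(x)$ equals $2^{-m}$ when $\varphi$ is satisfiable and $0$ otherwise.

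The decision procedure is then as follows. Run an exact metrical MAP decoder, take the returned sequence, and evaluate its probability in polynomial time as a product of the rational conditionals. This probability is positive if and only if $\varphi$ is satisfiable. The construction of the model, the weights and $K$ is polynomial, so NP-hardness follows.

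The only delicate point is a degenerate case. When every feasible sequence has probability zero, the argmax is all of $\mathcal{S}_K$, so the decoder may return any element of it. That is still correct, because evaluating the returned sequence gives $0$ exactly in the unsatisfiable case. If one prefers weights that are not all equal to one, a second option is to give $\mathtt{eos}$ and the bit tokens weight $1$ and $b_0,b_1$ weight $0$. The same argument then goes through, but I will keep unit weights, since that directly matches the special-case remark preceding the statement.
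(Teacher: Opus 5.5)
Your proof is correct and follows the same route as the paper: unit weights turn the metrical constraint into the fixed-length terminal constraint on the SAT model $P_{\varphi}$. You also usefully make explicit a check that the ``in particular'' step of Corollary~\ref{cor:unary-hard} leaves implicit, namely that feasible sequences using $b_0$ or $b_1$ among the first $m$ positions have probability zero.

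One small caveat concerns your optional aside. With $w(b_0)=w(b_1)=0$, the sequences $a\,b_j\,\mathtt{eos}$ become feasible with probability $2^{-m-1}$ for non-satisfying $a$. The optimum is then positive in both cases, so that variant must compare it with $2^{-m}$ rather than test positivity.
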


\begin{proof}
Take $w(a)=1$ for all tokens. Then a metrical constraint with target $K=L$ is equivalent to the fixed-length terminal constraint at length $L$. By the fixed-terminal special case of Corollary~\ref{cor:unary-hard}, exact MAP decoding under this unit-weight metrical constraint is NP-hard.
\end{proof}

\begin{corollary}
For succinct autoregressive models with polynomial-time next-token evaluation, exact computation of the total probability mass of metrical sentences of syllable count $K$ is \#P-hard.
\end{corollary}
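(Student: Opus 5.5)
The plan is to reduce directly to Theorem~\ref{thm:zl-hard} through the unit-weight special case, exactly as in the preceding corollary on MAP decoding. Set $w(a)=1$ for every $a\in\Sigma$, including $\mathtt{eos}$. For any complete sequence $x\in\mathcal{C}$ we then have $\sum_{i=1}^{|x|} w(x_i)=|x|$, so $\mathcal{S}_K=\{x\in\mathcal{C}: |x|=K\}=R_K$. Since a complete sequence of length $K$ has its unique $\mathtt{eos}$ in position $K$ and non-$\mathtt{eos}$ tokens before it, we get
\[
Z_K(\theta)=\sum_{x_{1:K-1}\in(\Sigma\setminus\{\mathtt{eos}\})^{K-1}} P_{\theta}(x_{1:K-1},\mathtt{eos}) = Z_L(\theta)\quad\text{with } L=K .
\]

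The reduction is then as follows. Given a CNF formula $\varphi$ over $m$ variables, construct the model $P_{\varphi}$ from the proof of Theorem~\ref{thm:zl-hard}, take the unit weight function, and set $K=m+1$. An oracle for the metrical mass returns $Z_{m+1}(P_{\varphi})=\#\mathrm{SAT}(\varphi)/2^m$, and multiplying by $2^m$ recovers $\#\mathrm{SAT}(\varphi)$. All pieces of the instance are polynomial size: the model description is as before, the weight table has $|\Sigma|=5$ entries, and $K$ is written in binary or unary with value $m+1$. Next-token evaluation remains polynomial time. Hence the metrical mass problem is \#P-hard under this polynomial-time (one-query) reduction.

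The only point that needs care is making sure the weighting also counts the $\mathtt{eos}$ token, so that weight $K$ corresponds to length exactly $K$ rather than $K+1$. If one preferred the convention $w(\mathtt{eos})=0$, the same argument would go through with $K=m$. Otherwise the argument is immediate.
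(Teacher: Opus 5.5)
Your proposal is correct and follows the paper's proof: unit weights make $\mathcal{S}_K$ coincide with the set of complete sequences of length exactly $K$, so $Z_K(\theta)=Z_L(\theta)$ with $L=K$, and hardness transfers from Theorem~\ref{thm:zl-hard}. The explicit one-query reduction with $K=m+1$ and your remark that the convention $w(\mathtt{eos})=0$ just shifts the target to $K=m$ are correct details that the paper leaves implicit.
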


\begin{proof}
Again take $w(a)=1$ for all tokens. Then
\[
Z_K(\theta)=\sum_{x\in\mathcal{S}_K} P_{\theta}(x)
\]
is exactly the fixed-length terminal normalization constant $Z_L(\theta)$ with $L=K$. The claim therefore follows immediately from the preceding theorem.
\end{proof}

\begin{remark}
This reduction is deliberately simple: it shows that metrical generation is already hard even in the degenerate unit-syllable case. Reductions from subset-sum or counting subset-sum would match the additive nature of meter more directly.
\end{remark}

\section{Decision version}

For completeness, consider the threshold problem.

\begin{definition}[MAP-THRESHOLD]
\textbf{Input:} a succinct autoregressive model $P_{\theta}$, an integer $n$, and a rational threshold $\tau$.

\textbf{Question:} does there exist a complete sequence $x_{1:n}\in\mathcal{C}$ of length $n$ such that
\[
P_{\theta}(x_{1:n}) \ge \tau ?
\]
\end{definition}

\begin{corollary}
MAP-THRESHOLD is NP-complete.
\end{corollary}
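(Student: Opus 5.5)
The proof has two parts: membership in NP and NP-hardness.

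\emph{Membership in NP.} The certificate is the sequence $x_{1:n}$ itself. Its size is polynomial in the input, since $n$ is part of the input. One can argue this by assuming $n$ is given in unary, or by noting that the instance size is understood to bound $n$, as in the paper's convention that model evaluation is polynomial in the instance size. The verifier performs three checks.
\begin{enumerate}[label=(\arabic*)]
\item It checks that $x_{1:n}\in\mathcal{C}$, that is, $x_n=\mathtt{eos}$ and $x_i\neq\mathtt{eos}$ for $i<n$.
\item It computes each conditional $P_{\theta}(x_i\mid x_{1:i-1})$ exactly as a rational. By assumption (ii) on the class $\mathcal{M}$, each of these takes polynomial time.
\item It multiplies the $n$ rationals and compares the product with $\tau$.
\end{enumerate}

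The step I expect to need care is the bit-size of the product. Each factor is produced in polynomial time, so its numerator and denominator have polynomially many bits. A product of $n$ such numbers therefore has bit-length at most $n$ times a polynomial, which is still polynomial. Exact rational multiplication and comparison with $\tau$ are then polynomial-time. This gives membership in NP.

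\emph{NP-hardness.} I would reuse the construction $P_{\varphi}$ from the proof of Theorem~\ref{thm:map-hard}, with $n=m+1$ and $\tau=2^{-m}$. The complete sequences of length exactly $m+1$ are of the form $a\,\mathtt{eos}$ with $a\in\{0,1\}^m$. As computed in the proof of Corollary~\ref{cor:unary-hard}, such a sequence has probability $2^{-m}$ if $a\models\varphi$ and $0$ otherwise. Sequences containing $b_j$ have length $m+2$, so they are excluded by the length requirement.

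Hence some complete sequence of length $n$ with probability at least $\tau$ exists if and only if $\varphi$ is satisfiable. The map $\varphi\mapsto(P_{\varphi},m+1,2^{-m})$ is polynomial-time: writing $\tau$ takes $O(m)$ bits, and $n$ takes $O(m)$ symbols even in unary. This is therefore a polynomial-time many-one reduction from SAT.

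Combining membership and hardness gives NP-completeness. The main subtlety is the NP-membership bookkeeping: the encoding of $n$ and the polynomial bit-size of the exact product. I would state explicitly that $n$ is polynomially bounded in the input size.
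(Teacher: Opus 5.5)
Your proof is correct and matches the paper's argument: guess-and-verify for membership in NP, and the SAT construction $P_\varphi$ with $n=m+1$ and $\tau=2^{-m}$ for hardness. You also supply two details the paper's ``immediate'' claim leaves implicit, namely that $n$ must be given in unary or be polynomially bounded, and that the exact product has polynomially many bits. One minor nit: complete sequences of length $m+1$ with some $b_j$ among the first $m$ tokens also belong to $\mathcal{C}$, contrary to how you phrase it. They have probability $0<\tau$, however, so your conclusion still holds.
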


\begin{proof}
Membership in NP is immediate: guess a sequence $x_{1:n}$, compute the $n$ local conditional probabilities, multiply them, and compare the result with $\tau$. NP-hardness follows from the SAT reduction above by taking $n=m+1$ and $\tau=2^{-m}$.
\end{proof}

\section{Tractable and intractable constraints}

The results above do not say that every constraint makes sampling or decoding hard. The correct distinction is between constraints that can be enforced from a bounded state and constraints whose exact enforcement requires nontrivial global continuation masses.

A first easy case is prefix conditioning. If one conditions on a fixed prefix,
\[
x_{1:k}=u,
\]
then exact sampling is immediate: one starts generation from the prefix $u$ and continues autoregressively. More generally, any constraint determined entirely by a bounded prefix can be enforced online without global renormalization.

At the opposite extreme are global constraints such as exact terminal position, exact syllable count, or general sentence-level optimization. For such events, exact conditional sampling at time $i$ requires probabilities of the form
\[
\Pr_{\theta}(E \mid x_{1:i-1}a),
\]
namely the probability that a partial continuation can still be completed into a full sequence satisfying the global event $E$. These are continuation masses over exponentially many suffixes. In the general autoregressive setting, the model class supplies no generic bounded sufficient statistic for these quantities, which is why exact conditioning becomes hard.

This also clarifies the role of regular-language constraints. If the model itself has a bounded hidden state, as in a finite-state Markov model, then a regular constraint can be combined with the model automaton and handled by dynamic programming on the product state space. In that bounded-state setting, exact sampling and exact MAP decoding under regular constraints are tractable. But for a general autoregressive model, the automaton state of the regular language is not enough: one would also need a bounded summary of the model's dependence on the entire prefix. Our hardness results show that no such general reduction exists in the broad succinct autoregressive setting.

Thus the right boundary is not simply between constrained and unconstrained generation. It is between constraints that admit an exact bounded-state recursion together with the model, and constraints that require globally summing or optimizing over future continuations.

\section{Inpainting as mixed prefix-suffix conditioning}

Inpainting provides another natural illustration of the same phenomenon. Filling a blank in a partially specified sequence amounts to sampling under both a prefix constraint and a suffix constraint. The prefix part is compatible with autoregressive generation: conditioning on a fixed prefix simply means starting generation from that prefix. The suffix part is different. It amounts to conditioning on a future event, and therefore requires exact continuation masses over all ways of completing the missing segment so as to match the prescribed suffix.

This perspective also helps interpret existing infilling and inpainting systems. In symbolic music, closely related tasks include future-aware constrained generation and musical inpainting systems such as Anticipation-RNN, score inpainting models, the Piano Inpainting Application, and DeepBach \cite{hadjeres-nielsen-2020,pati-lerch-hadjeres-2019,hadjeres-crestel-2021,hadjeres-pachet-nielsen-2017}. On the text side, fill-in-the-blank and text-infilling formulations likewise condition on both left and right context rather than only on a causal prefix \cite{donahue-lee-liang-2020}.

Formally, let a partially specified sequence be of the form
\[
u\, y\, v,
\]
where $u$ is a fixed prefix, $v$ a fixed suffix, and $y$ the missing segment to be generated. Exact inpainting requires sampling from the conditional distribution
\[
P_\theta(y \mid u\, y\, v \in \mathcal{C}).
\]

The fixed-length terminal problem studied above is an immediate special case: take $u=\epsilon$ empty, let $v=\mathtt{eos}$, and require the total length to be $L$. Then inpainting reduces exactly to sampling from
\[
P_\theta(x \mid x\in\mathcal{C},\ |x|=L),
\]
whose normalization constant was shown above to be \#P-hard to compute.

\begin{corollary}\label{cor:inpainting-hard}
Any generic exact inpainting procedure for succinct autoregressive models that samples from the true conditional distribution
\[
P_\theta(y \mid u\, y\, v \in \mathcal{C})
\]
by causal left-to-right renormalization of a fixed pretrained autoregressive model must compute \#P-hard conditioned normalization terms. In particular, exact inpainting does not reduce to ordinary left-to-right sampling from a fixed pretrained causal model by polynomial-time local reweighting alone.
\end{corollary}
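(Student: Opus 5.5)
The plan is to reduce to Theorem~\ref{thm:zl-hard} through the special case already noted before the statement: $u=\epsilon$, $v=\mathtt{eos}$, and total length fixed to $L$. The first step is to make precise what ``causal left-to-right renormalization'' means for the inpainting law. Write $y=y_1\cdots y_k$ with $k=L-1$, and let $E$ be the event that $u\,y\,v\in\mathcal{C}$ with $|y|=k$. The exact conditional then factorizes as
\[
P_\theta(y\mid E)=\prod_{i=1}^{k}\frac{P_\theta(y_i\mid u\,y_{1:i-1})\,M(u\,y_{1:i})}{M(u\,y_{1:i-1})},
\]
where $M(w)$ is the continuation mass. This is the probability, starting from prefix $w$, of completing the missing segment to the prescribed length and then emitting $v$. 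Any procedure that reweights the fixed model's next-token conditionals step by step and reproduces this law exactly must, at each step, produce the ratios $M(u\,y_{1:i})/M(u\,y_{1:i-1})$. The reason is that these ratios are uniquely determined by the target conditional law together with the known local conditionals.

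The second step is to show that these terms are \#P-hard already at the first step. At the empty prefix, $M(u)=M(\epsilon)$ is exactly $Z_L(\theta)$. I will use the model $P_\varphi$ from the proof of Theorem~\ref{thm:zl-hard}. There, the first-step renormalized probability of the token $0$ is $M(0)/(2M(\epsilon))$, and $M(\epsilon)$ equals $\#\mathrm{SAT}(\varphi)/2^m$.

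The third step recovers $Z_L$ itself from the ratios the procedure outputs. The renormalization factors telescope along any fixed prefix $a\in\{0,1\}^m$, reaching the final factor $M(a)$. In $P_\varphi$, $M(a)\in\{0,1\}$ and is polynomial-time checkable, since it equals $[a\models\varphi]$. From this I can read off $M(\epsilon)$ as the product of the inverted ratios along a satisfying path. This only works if such a path exists, and the normalization is undefined when $Z_L=0$. To avoid that case I will modify $\varphi$ by adding a fresh variable that forces one extra guaranteed satisfying assignment. This is the usual $\#\mathrm{SAT}(\varphi)+1$ trick, done by replacing $\varphi$ with $(\varphi\wedge \neg z)\vee(z\wedge \bigwedge_j \neg v_j)$ and converting to CNF in polynomial time. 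With the modification, $Z_L>0$ always holds, and computing it gives $\#\mathrm{SAT}(\varphi)$. Since the reduction is polynomial, access to the renormalization terms yields a \#P-hard quantity. This gives the first sentence of the statement.

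The second sentence, that exact inpainting does not reduce to polynomial-time local reweighting alone, then follows conditionally: such a reweighting would compute $\#\mathrm{SAT}$ in polynomial time, implying $\mathrm{P}=\#\mathrm{P}$ (so in particular $\mathrm{P}=\mathrm{NP}$). The main obstacle is the first step. The claim needs a precise formalization of ``causal left-to-right renormalization'' so that the procedure provably exposes the ratios, rather than merely sampling correctly through some other mechanism. I will define the procedure as outputting the exact reweighted next-token distribution at each step. Under that definition the extraction in steps two and three is rigorous. I will state the hardness relative to that interface and not claim hardness of sampling in general.
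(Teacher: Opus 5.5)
Your proposal is correct. It rests on the same special case as the paper ($u=\epsilon$, $v=\mathtt{eos}$, total length $L$, with the model $P_\varphi$ from Theorem~\ref{thm:zl-hard}), but it proves noticeably more.

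The paper's proof stops at the observation that the normalization constant of this special case is $Z_L(\theta)$. It then concludes that a renormalizing sampler ``evaluates'' it. It never explains why a procedure that only needs the local ratios $M(wa)/M(w)$ must produce a \#P-hard quantity. You close exactly that step:
\begin{enumerate}
\item You fix the interface and observe that the reweighted next-token distributions determine these ratios uniquely.
\item You use self-reducibility: the ratios telescope along a path ending at a satisfying assignment, where $M(a)=1$, which recovers $Z_L$ itself.
\item The $\#\mathrm{SAT}+1$ padding guarantees $Z_L>0$, so the conditional law is defined at all.
\end{enumerate}
The CNF conversion is immediate: the padded formula is equivalent to $\bigwedge_{C}(z\vee C)\wedge\bigwedge_{j}(\neg z\vee\neg v_j)$, with $C$ ranging over the clauses of $\varphi$.

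Your route buys a stronger, interface-level statement: even the local renormalization terms are \#P-hard to produce. It also makes explicit that the second sentence of the corollary is conditional, on $\mathrm{FP}\neq\#\mathrm{P}$ (your ``$\mathrm{P}=\#\mathrm{P}$'' should be read as $\mathrm{FP}=\#\mathrm{P}$). The paper's route is shorter but leaves both points implicit.

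One simplification is available. You need not steer the procedure along the known padded assignment. Every trajectory an exact conditional sampler produces lies in the support of the conditional law, so it ends at a satisfying assignment. Hence the product of the ratios recorded along any sampled run already equals $1/Z_L$.
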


\begin{proof}
Take the special case $u=\epsilon$, $v=\mathtt{eos}$, and fixed total length $L$. Then exact inpainting contains, as a special case, conditioning on the regular language of complete sequences of length exactly $L$ discussed in Section~\ref{sec:fixedlength}. The associated normalization constant is $Z_L(\theta)$, whose exact computation is \#P-hard by Theorem~\ref{thm:zl-hard}. Therefore any causal exact sampling scheme based on local renormalization of next-token probabilities evaluates \#P-hard continuation masses in the general case.
\end{proof}

This gives a clean interpretation of current inpainting proposals. Fast practical systems avoid the exact-conditioning problem rather than solve it head-on. Anticipation-RNN explicitly replaces exact Bayesian conditioning of a fixed unconstrained model by a separately trained constrained model; text-infilling systems likewise train or fine-tune models specifically for infilling; the Piano Inpainting Application uses a dedicated encoder-decoder architecture; and DeepBach uses pseudo-Gibbs sampling rather than exact causal conditioning \cite{hadjeres-nielsen-2020,donahue-lee-liang-2020,hadjeres-crestel-2021,hadjeres-pachet-nielsen-2017}. These methods are approximate inference procedures or changes of model class, not generic exact samplers for the conditional distribution induced by an arbitrary pretrained autoregressive model. Equivalently, with the original causal model held fixed, they produce biased samples relative to the exact conditioned target law.

\section{Discussion}

The complexity statements above have a natural interpretation for contemporary language models. Exact rhyme, exact syllable count, and positional letter or suffix constraints are all examples of global form constraints on complete sequences. A request such as ``produce a line of exactly twelve syllables,'' ``make the final word rhyme with \emph{time},'' ``ensure that the last word ends in \emph{-ing},'' or ``make the third word end with the letter \emph{e}'' is not hard to state, but exact satisfaction requires reasoning about future completions rather than only about local next-token likelihood.

This explains a familiar empirical phenomenon: autoregressive language models often come close to satisfying such constraints but miss by one syllable, produce an approximate rather than exact rhyme, or satisfy one positional requirement only by violating another. Such failures are consistent with the complexity results proved here. In the general autoregressive setting, exact enforcement of global formal constraints requires access to continuation masses or global optimization procedures that are computationally intractable in the worst case.

The results do not imply that language models never satisfy rhyme, meter, or positional constraints. They do so approximately, and specialized external machinery helps substantially: constrained decoding, finite-state controllers, reranking, symbolic metrical modules, backtracking search, or dedicated infilling architectures in text and music \cite{hadjeres-nielsen-2020,pati-lerch-hadjeres-2019,hadjeres-crestel-2021,donahue-lee-liang-2020,hadjeres-pachet-nielsen-2017} all impose structure that plain left-to-right sampling does not provide on its own. This paper stresses that exact satisfaction of such constraints does not arise generically from unconstrained local autoregressive sampling alone. When rhyme, meter, terminal patterns, or inpainting constraints are enforced without exact continuation masses, the resulting procedure is biased relative to the exact conditional distribution of the fixed underlying autoregressive model. These are hidden inferential biases: distortions that come from approximate conditioning.

\section{Conclusion}

Autoregressive factorization provides local access to next-token probabilities, not a tractable general procedure for global mode-finding or exact conditioning under global formal constraints. For succinctly represented autoregressive models with polynomial-time next-token evaluation, exact sentence-level MAP decoding is NP-hard, exact constrained MAP remains NP-hard under unary and metrical constraints, and exact conditioned normalization is already \#P-hard even for regular constraints such as fixed-length terminal events. The positive side of the picture is the first-order Markov case: in such a case, constrained generation can be formulated exactly, and exact sampling with the correct conditional probabilities is available for regular, unary, and Markov constraints \cite{pachet-roy-2026,papadopoulos-pachet-roy-sakellariou-2015}. Outside that bounded-state setting, the situation changes qualitatively. Whenever generation must satisfy exact global form constraints (rhyme, meter, terminal patterns, positional endings, or related formal devices) plain autoregressive generation faces a structural computational obstacle rather than a mere implementation inconvenience. Consequently, constrained uses of autoregressive models generally provide incomplete coverage of the admissible solution space and biased probabilities over the solutions they return.


\begin{thebibliography}{9}

\bibitem{eikema-aziz-2020}
Bryan Eikema and Wilker Aziz.
\newblock Is MAP Decoding All You Need? The Inadequacy of the Mode in Neural Machine Translation.
\newblock In \emph{Proceedings of COLING 2020}, pages 3080--3090, 2020.

\bibitem{blodgett-etal-2020-language}
Su Lin Blodgett, Solon Barocas, Hal Daum\'e III, and Hanna Wallach.
\newblock Language (Technology) is Power: A Critical Survey of ``Bias'' in NLP.
\newblock In \emph{Proceedings of the 58th Annual Meeting of the Association for Computational Linguistics}, pages 5454--5476, 2020.

\bibitem{meister-vieira-cotterell-2020}
Clara Meister, Tim Vieira, and Ryan Cotterell.
\newblock If Beam Search is the Answer, What was the Question?
\newblock In \emph{Proceedings of EMNLP 2020}, pages 2173--2185, 2020.

\bibitem{stahlberg-byrne-2019}
Felix Stahlberg and Bill Byrne.
\newblock On NMT Search Errors and Model Errors: Cat Got Your Tongue?
\newblock In \emph{Proceedings of EMNLP-IJCNLP 2019}, pages 3356--3362, 2019.

\bibitem{hadjeres-nielsen-2020}
Ga\"etan Hadjeres and Frank Nielsen.
\newblock Anticipation-RNN: enforcing unary constraints in sequence generation, with application to interactive music generation.
\newblock \emph{Neural Computing and Applications}, 32:995--1005, 2020.

\bibitem{pati-lerch-hadjeres-2019}
Ashis Pati, Alexander Lerch, and Ga\"etan Hadjeres.
\newblock Learning to Traverse Latent Spaces for Musical Score Inpainting.
\newblock In \emph{Proceedings of the 20th International Society for Music Information Retrieval Conference}, pages 343--351, 2019.

\bibitem{hadjeres-crestel-2021}
Ga\"etan Hadjeres and L\'eopold Crestel.
\newblock The Piano Inpainting Application.
\newblock arXiv:2107.05944, 2021.

\bibitem{donahue-lee-liang-2020}
Chris Donahue, Mina Lee, and Percy Liang.
\newblock Enabling Language Models to Fill in the Blanks.
\newblock In \emph{Proceedings of the 58th Annual Meeting of the Association for Computational Linguistics}, pages 2492--2501, 2020.

\bibitem{hadjeres-pachet-nielsen-2017}
Ga\"etan Hadjeres, Fran\c{c}ois Pachet, and Frank Nielsen.
\newblock DeepBach: a Steerable Model for Bach Chorales Generation.
\newblock In \emph{Proceedings of the 34th International Conference on Machine Learning}, pages 1362--1371, 2017.

\bibitem{pachet-roy-2026}
Fran\c{c}ois Pachet and Pierre Roy.
\newblock Markov Constraints for Controlled Sequence Generation.
\newblock To appear in \emph{Festum-PI 2025 Proceedings}, \emph{Lecture Notes in Mathematics}, Springer, N. Alikakos and C\'edric Villani, editors, 2026.

\bibitem{papadopoulos-pachet-roy-sakellariou-2015}
Alexandre Papadopoulos, Fran\c{c}ois Pachet, Pierre Roy, and Jason Sakellariou.
\newblock Exact Sampling for Regular and Markov Constraints with Belief Propagation.
\newblock In \emph{Principles and Practice of Constraint Programming}, pages 341--350, 2015.

\end{thebibliography}
\end{document}